\title{Approximation capability of neural networks
on spaces of probability measures and
tree-structured domains}
\author{
  Tom\'{a}\v{s} Pevn\'{y}\thanks{Tom\'{a}\v{s} Pevn\'{y} is also a Technical Lead at research center in Prague of Cisco Systems, Inc.} \\
  Department of Computer Science\\
  Czech Technical University in Prague\\
  Czech Republic, 12000 \\
  \texttt{pevnak@gmail.com} \\
  \And
  Voj\v{e}ch Kova\v{r}\'{i}k\thanks{Voj\v{e}ch Kova\v{r}\'{i}k is also a PhD student at the department of mathematical analysis of Charles university}\\
  Department of Computer Science\\
  Czech Technical University in Prague\\
  Czech Republic, 12000 \\
  \texttt{vojta.kovarik@gmail.com} \\
}
\newtheorem{theorem}{Theorem}
\newtheorem{lemma}{Lemma}[theorem]
\newtheorem{definition}{Definition}
\newtheorem{corollary}{Corollary}
\theoremstyle{remark}
\newtheorem*{remark}{Remark}
\newcommand{\xspace}[0]{\ensuremath{\mathcal{X}}}
\newcommand{\aset}[0]{\ensuremath{\mathcal{A}}}
\newcommand{\asetf}[0]{\ensuremath{\aset^{\mathcal{F}}}}
\newcommand{\pspace}[0]{\ensuremath{\mathcal{P}_{\xspace}}}
\newcommand{\pinst}[0]{\ensuremath{p} }
\newcommand{\real}[0]{\mathbb{R}}
\newcommand{\co}[1]{\ensuremath{C(#1,\real)}}
\newcommand{\mes}[1]{\ensuremath{M(#1,\real)}}
\newcommand{\xprod}{\ensuremath{\xspace_{1}\times\cdots\times\xspace_l}}
\newcommand{\asetp}[0]{\ensuremath{\aset^{\mathcal{F}_1\times\ldots\times{\mathcal{F}_l}}}}
\begin{document}
\maketitle
\begin{abstract}
This paper extends the proof of density of neural networks in the space of continuous (or even measurable) functions on Euclidean spaces to functions on compact sets of probability measures.
By doing so the work parallels a more then a decade old results on mean-map embedding of probability measures in reproducing kernel Hilbert spaces.  
The work has wide practical consequences for multi-instance learning, where it theoretically justifies some recently proposed constructions.
The result is then extended to Cartesian products, yielding universal approximation theorem for tree-structured domains, which naturally occur in data-exchange formats like JSON, XML, YAML, AVRO, and ProtoBuffer. This has important practical implications, as it enables to automatically create an architecture of neural networks for processing structured data (AutoML paradigms), as demonstrated by an accompanied library for JSON format.
\end{abstract}
%!TEX root = notes.tex

\section{Motivation}
\label{sec:motivation}

\begin{wrapfigure}{o}[.15in]{2.55in}
\captionsetup{width=.90\linewidth} 
\vspace{-18pt}
\begin{minipage}[t]{2.5in}
\begin{lstlisting}
{"weekNumber":"39",
"workouts":[
{ "sport":"running",
  "distance":19738,
  "duration":1500,
  "calories":375,
  "avgPace":76,
  "speedData":{
    "speed":[10,9,8],
    "altitude":[100,104,103,81],
    "labels":["0.0km","6.6km","13.2km","19.7km"]}},
  {"sport":"swimming",
    "distance":664,
    "duration":1800,
    "calories":250,
    "avgPace":2711}]}
\end{lstlisting}
\end{minipage}
\vspace{-.20in}
\caption{\small{Example of JSON document, adapted from \url{https://github.com/vaadin/fitness-tracker-demo}}}\label{fig:json}
\vspace{-.20in}
\end{wrapfigure}

Prevalent machine learning methods assume their input to be a vector or a matrix of a fixed dimension, or a sequence, but many sources of data have the structure of a tree, imposed by data formats like JSON, XML, YAML, Avro, or ProtoBuffer (see Figure~\ref{fig:json} for an example).
While the obvious complication is that such a tree structure is more complicated than having a single variable, 
these formats also contain some ``elementary'' entries which are already difficult to handle in isolation.
Beside strings, for which a~plethora conversions to real-valued vectors exists (one-hot encoding, histograms of n-gram models, word2vec~\cite{mikolov2013distributed}, output of a recurrent network, etc.), the most problematic elements seem to be unordered lists (sets) of records
(such as the "workouts" element and all of the subkeys of "speedData" in Figure~\ref{fig:json}), whose length can differ from sample to sample and the classifier processing this input needs to be able to cope with this variability.

The variability exemplified above by "workouts" and "speedData" is the defining feature of \emph{Multi-instance learning} (MIL) problems (also called \emph{Deep Sets} in \cite{zaheer2017deep}), where it is intuitive to define a~sample as a collection of feature vectors. Although all vectors within the collection have the same dimension, their number can differ from sample to sample. In MIL nomenclature, a sample is called a \emph{bag} and an individual vector an \emph{instance.} The difference between \emph{sequences} and bags is that the order of instances in the bag is not important and the output of the classifier should be the same for an~arbitrary permutation of instances in the vector.

MIL was introduced in~\cite{dietterich1997solving} as a solution for a problem of learning a classifier on instances from labels available on the level of a whole bag.
To date, many approaches to solve the problem have been proposed, and the reader is referred to~\cite{amores2013multiple} for an excellent review and taxonomy. 
The setting has emerged from the assumption of a bag being considered positive if at least one instance was positive.
This assumption is nowadays used for problems with weakly-labeled data~\cite{bergamo2010exploiting}.
While many different definitions of the problem have been introduced (see~\cite{foulds2010review} for a review), this work adopts a general definition of~\cite{muandet2012learning}, where each sample (bag) is viewed as a probability distribution observed through a set of realizations (instances) of a random variable with this distribution.
Rather than working with vectors, matrices or sequences, the classifier therefore classifies \emph{probability measures}.

Independent works of~\cite{zaheer2017deep,edwards2017towards} and~\cite{pevny2017using} have proposed an adaptation of neural networks to MIL problems (hereinafter called MIL NN). The adaptation uses two feed-forward neural networks, where the first network takes as an input individual instances, its output is an element-wise averaged, and the~resulting vector describing the whole bag is sent to the second network. This simple approach yields a very general, well performing and robust algorithm, which has been reported by all three works. Since then, the MIL NN has been used in numerous applications, for example in causal reasoning~\cite{santoro2017simple}, in computer vision to process point clouds~\cite{su2018splatnet,xu2018spidercnn}, in medicine to predict prostate cancer~\cite{ing2018deep}, in training generative adversarial networks~\cite{ing2018deep}, or to process network traffic to detect infected computers~\cite{pevny2016discriminative}. The last work has demonstrated that the MIL NN construction can be nested (using sets of sets as an input), which allows the neural network to handle data with a \emph{hierarchical} structure.
 
The wide-spread use of neural networks is theoretically justified by their universal approximation property -- the fact that any continuous function on (a compact subset of) a Euclidean space to real numbers can be approximated by a neural network with arbitrary precision \cite{hornik1991approximation,lechno1993multilayer}.
However, despite their good performance and increasing popularity, no general analogy of the universal approximation theorem has been proven for  MIL NNs.
This would require showing that MIL NNs are dense in the space of continuous functions from the space of probability measures to real numbers and -- to the best of our knowledge -- the only result in this direction is restricted to input domains with finite cardinality~\cite{zaheer2017deep}.

This work fills this gap by formally proving that MIL NNs with two non-linear layers, a linear output layer and mean aggregation after the first layer are dense in the space of continuous functions from the space of probability measures to real numbers (Theorem~\ref{th:mlp} and Corollary~\ref{cor:MIL_NN}).
In Theorem~\ref{th:general}, the proof is extended to data with an arbitrary tree-like schema (XML, JSON, ProtoBuffer). The reasoning behind the proofs comes from kernel embedding of distributions (mean map)~\cite{smola2007ahilbert,sriperumbudur2008injective} and related work on Maximum Mean Discrepancy~\cite{gretton2012akernel}. This work can therefore be viewed as a formal adaptation of these tools to neural networks.
While these results are not surprising, the authors believe that as the~number of applications of NNs to MIL and tree-structured data grows, it becomes important to have a~formal proof of the soundness of this approach.

%A reader interested in experimental results is warned that the paper contains only the theoretical proof.
The paper only contains theoretical results --- for experimental comparison to prior art, the~reader is referred to~\cite{zaheer2017deep,edwards2017towards,pevny2017using,santoro2017simple,su2018splatnet,xu2018spidercnn,ing2018deep,pevny2016discriminative}. However, the authors provide a proof of concept demonstration of processing JSON data at \url{https://codeocean.com/capsule/182df525-8417-441f-80ef-4d3c02fea970/?ID=f4d3be809b14466c87c45dfabbaccd32}.

% \tpnote{However, the paper is accompanied by a library written in Julia~\cite{Julia} utilizing Flux~\cite{innes:2018} implementing the approach proposed in Theorem~\ref{th:general} to process tree-structured data. An extension demonstrating the ease of processing JSON data can be found}\vknote{ here.}\vknote{Does ``here'' mean ``in the same library'', or was that an additional thing? Would ``The library also contains an extension demonstrating the ease of processing JSON data.'' work?}
%!TEX root = notes.tex
\section{Notation and summary of relevant work}\label{sec:notation_summary}
This section provides background for the proposed extensions of the universal approximation theorem~\cite{hornik1991approximation,lechno1993multilayer}.
For convenience, it also summarizes solutions to multi-instance learning problems proposed in~\cite{pevny2017using,edwards2017towards}.

By $C(\mathcal K,\real)$ we denote the space of continuous functions from $\mathcal K$ to $\real$ endowed with the topology of uniform convergence. Recall that this topology is metrizable by the supremum metric $||f-g||_{\sup} = \sup_{x\in K} |f(x)-g(x)|$.

Throughout the text, $\xspace$ will be an arbitrary metric space and $\pspace$ will be some compact set of (Borel) probability measures on $\xspace$.
Perhaps the most useful example of this setting is when $\xspace$ is a compact metric space and $\pspace = \mathcal{P}(\xspace)$ is the space of \emph{all} Borel probability measures on $\xspace$.
Endowing $\pspace$ with the $w^\star$ topology turns it into a compact metric space (the metric being $\rho^*(p,q) = \sum_n 2^{-n}\cdot |\int f_n \textnormal{\textrm d}p - \int f_n \textnormal{\textrm d}q|$ for some dense subset $\{f_n \, | \, n\in \mathbb N \} \subset \co{\mathcal X}$ -- see for example Proposition 62 from \cite{habala1996introduction}).
% (Note that on compact metric spaces, Borel measures are automatically Radon.)
Alternatively, one can define metric on $\mathcal{P}(\mathcal{X})$ using for example integral probability metrics~\cite{muller97integral} or total variation. In this sense, the results presented below are general, as they are not tied to any particular topology. 

\subsection{Universal approximation theorem on compact subsets of $\real^d$}
\label{recapitulation}
%The universal approximation theorem \cite{hornik1991approximation,lechno1993multilayer} states that feed-forward neural networks with a~single non-linear layer and linear output layer (hereinafter denoted as $\Sigma$-networks) are dense in the space
%%$\mes{\mathbb{\xspace}}$
%of measurable functions from $\real^d$ to $\real.$

The next definition introduces set of affine functions forming the base of linear and non-linear layers of neural networks.
\begin{definition}
\label{def:affine}
For any $d\in\mathbb{N}$, $\aset^d$ is the set of all affine functions on $\real^d$ i.e.
\begin{equation}
\label{eq:affine}
\aset^d = \left\{a:\mathbb{R}^d \rightarrow \mathbb{R} \vert \ a(x) = w^{\mathrm{T}}x +b , w \in \real^d, b \in \real \right\}.
\end{equation}
\end{definition}

The main result of~\cite{lechno1993multilayer} states that feed-forward neural networks with a single non-linear hidden layer and linear output layer (hereinafter called $\Sigma$-networks) are dense in the space of continuous functions. Lemma~\ref{lem:ctom} then implies that the same holds for measurable functions.
\begin{theorem}[Universal approximation theorem on $\real^d$]
\label{th:lechno}
For any non-polynomial measurable function  $\sigma$ on $\real$ and every $d\in\mathbb{N}$, the following family of functions is dense in $\co{\real^d}$:
\begin{equation}
\Sigma(\sigma,\mathcal{A}^d) = \left\{f:\real^d \rightarrow \real \left| \ f(x) = \sum_{i=1}^n \alpha_i\sigma(a_i(x)), n\in\mathbb{N}, \alpha_i \in \real, a_i \in \aset^d\right.\right\} .
\label{eq:sigmaonr}
\end{equation}
\end{theorem}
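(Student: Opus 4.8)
Since density is a topological notion, it suffices to fix a compact $K\subset\real^d$, a target $f\in\co{\real^d}$ and $\varepsilon>0$, and to find an element of $\Sigma(\sigma,\aset^d)$ within $\varepsilon$ of $f$ uniformly on $K$. The plan is to reduce the multivariate statement to the one-dimensional one and then prove the latter for non-polynomial $\sigma$. For the reduction I would invoke density of \emph{ridge functions}: the linear span of $\{x\mapsto g(w^{\mathrm T}x)\mid w\in\real^d,\ g\in\co{\real}\}$ is dense in $\co{K}$. This can be obtained from the Stone--Weierstrass theorem applied to the subalgebra generated by the functions $x\mapsto\cos(w^{\mathrm T}x)$ and $x\mapsto\sin(w^{\mathrm T}x)$: it contains the constants, separates points of $\real^d$, is closed under multiplication by the product-to-sum identities, hence dense, and every element of it is a finite sum of ridge functions. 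It is therefore enough to approximate a single ridge function $x\mapsto g(w^{\mathrm T}x)$, and writing $t=w^{\mathrm T}x$ this is exactly the claim that $\mathrm{span}\{t\mapsto\sigma(\lambda t+\theta)\mid\lambda,\theta\in\real\}$ is dense in $\co{\real}$ on compacta, since a term $\sigma(\lambda t+\theta)$ pulls back to $x\mapsto\sigma\bigl((\lambda w)^{\mathrm T}x+\theta\bigr)\in\Sigma(\sigma,\aset^d)$.

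For the one-dimensional statement, let $S\subseteq\co{\real}$ denote the closure of $\mathrm{span}\{t\mapsto\sigma(\lambda t+\theta)\mid\lambda,\theta\in\real\}$, and suppose first that $\sigma\in C^\infty(\real)$. For fixed $\lambda,\theta$ the difference quotients $h^{-1}\bigl(\sigma((\lambda+h)t+\theta)-\sigma(\lambda t+\theta)\bigr)$ belong to the span and converge, uniformly on compacta as $h\to0$, to $t\,\sigma'(\lambda t+\theta)$; iterating $k$ times gives $t^k\sigma^{(k)}(\lambda t+\theta)\in S$ for every $k\in\mathbb N$, and then letting $\lambda\to0$ gives $\sigma^{(k)}(\theta)\,t^k\in S$ for all $k$ and all $\theta$. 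Since $\sigma$ is not a polynomial, there is a point $\theta_0$ with $\sigma^{(k)}(\theta_0)\ne0$ for every $k$: otherwise the closed sets $Z_k=\{\theta\mid\sigma^{(k)}(\theta)=0\}$ would cover $\real$, so by the Baire category theorem some $Z_k$ would contain an interval, and a further argument of this kind forces $\sigma$ to be a polynomial. Taking $\theta=\theta_0$ yields $t^k\in S$ for every $k$, hence $S$ contains all polynomials, and by the Weierstrass approximation theorem $S=\co{\real}$.

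To remove the smoothness assumption, fix a mollifier $\varphi\in C_c^\infty(\real)$ with $\int\varphi=1$ and set $\sigma_\varphi=\sigma\ast\varphi\in C^\infty(\real)$. Riemann sums of the integral $\int\sigma(\lambda t+\theta-y)\varphi(y)\,\mathrm dy=\sigma_\varphi(\lambda t+\theta)$ are finite linear combinations of translates $\sigma(\lambda t+\theta')$ and converge uniformly on compacta, so $t\mapsto\sigma_\varphi(\lambda t+\theta)\in S$ for all $\lambda,\theta$; hence the previous paragraph applied to $\sigma_\varphi$ already yields $S=\co{\real}$ as soon as $\sigma_\varphi$ is not a polynomial. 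If $\sigma\ast\varphi$ were a polynomial for every such $\varphi$, then a degree-bounding category argument would make $\sigma$ itself a polynomial, contradicting the hypothesis; so a suitable $\varphi$ exists. Combining this with the reduction above proves the theorem.

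The step I expect to be the main obstacle is the non-smooth regime: making the mollification argument rigorous --- both the Riemann-sum approximation and, in particular, the claim that some $\sigma\ast\varphi$ fails to be a polynomial --- and establishing the real-analysis lemma that a $C^\infty$ function which is not a polynomial has a point at which no derivative vanishes. The multivariate reduction via ridge functions and the passage from monomials to arbitrary continuous functions are, by comparison, routine.
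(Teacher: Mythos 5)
The paper does not prove this statement at all --- it is imported verbatim from Leshno et al.\ (the reference \texttt{lechno1993multilayer}), so there is no internal proof to compare against. Your outline is, in substance, the standard published argument for that theorem (as in Leshno--Lin--Pinkus--Schocken and Pinkus's survey): Stone--Weierstrass on $\cos(w^{\mathrm T}x)$, $\sin(w^{\mathrm T}x)$ to get density of ridge functions, the difference-quotient argument producing $t^k\sigma^{(k)}(\theta)$ in the closed span for smooth $\sigma$, the Baire-category lemma that a non-polynomial $C^\infty$ function has a point where no derivative vanishes, and mollification to remove smoothness. All of those steps are correct and the gaps you flag (the category lemma, the degree-bounding argument for $\sigma\ast\varphi$) are real but standard and fillable.

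The one place where your argument genuinely breaks, however, is the mollification step \emph{under the hypothesis as stated}. For a function $\sigma$ that is merely measurable, the Riemann sums of $\int\sigma(\lambda t+\theta-y)\varphi(y)\,\mathrm dy$ need not converge to the convolution at all, let alone uniformly on compacta --- Riemann-sum convergence requires $\sigma$ to be locally Riemann integrable, i.e.\ locally bounded with a discontinuity set of Lebesgue measure zero. This is not a removable technicality: the theorem is in fact \emph{false} for arbitrary non-polynomial measurable $\sigma$ (take $\sigma$ equal to a polynomial off a null set; then every $\sum_i\alpha_i\sigma(a_i(x))$ with nonzero weight vectors agrees a.e.\ with a polynomial of bounded degree, so $\Sigma(\sigma,\aset^d)$ cannot be uniformly dense). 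The actual theorem of Leshno et al.\ assumes $\sigma$ is locally essentially bounded, that the closure of its discontinuity set has measure zero, and that $\sigma$ is not a.e.\ equal to a polynomial; under those hypotheses your mollification step (and a Lusin-type reduction for the discontinuous part) goes through. So the gap you should close is not just ``make the Riemann sums rigorous'' but ``restore the regularity hypotheses that the paper's statement silently drops''; with those in place, your proof is the correct and complete standard one.
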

The key insight of the theorem isn't that a single non-linear layer suffices, but the fact that any continuous function can be approximated by neural networks.
Recall that for $\mathcal K \subset \mathbb R^d$ compact, any $f\in\co{\mathcal K}$ can be continuolusly extended to $\mathbb R^d$, and thus the same result holds for $\co{\mathcal K}$.
Note that if $\sigma$ was a polynomial of order $k$, $\Sigma(\sigma,\mathcal A^d)$ would only contain polynomials of order $\leq \! k$.

The following metric corresponds to the notion of convergence in measure:

\begin{definition}[Def. 2.9 from \cite{hornik1991approximation}]
\label{def:rhomu}
For a Borel probability measure $\mu$ on $\xspace$, define a metric
\begin{equation}
\rho_\mu(f,g) = \inf \left\{\epsilon >0 \, \right| \ \mu\left(\left\{x\in \xspace; \,|f(x) - g(x)|\geq\epsilon\right\}\right)<\epsilon\left.\right\}
\end{equation}
on \mes{\xspace}, where \mes{\xspace} denotes the collection of all (Borel) measurable functions.
\end{definition}
Note that for finite $\mu$, the uniform convergence implies convergence in $\rho_\mu$~\cite[L.\,A.1]{hornik1991approximation}:

\begin{lemma}%[Lemma A.1 from \cite{hornik1991approximation}]
\label{lem:ctom}
For every finite Borel measure $\mu$ on a compact $\mathcal{K},$ $\co{\mathcal{K}}$ is $\rho_\mu$-dense in  $\mes{\mathcal{K}}.$
\end{lemma}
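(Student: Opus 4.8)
The plan is to reduce the statement to two classical facts: Lusin's theorem and the Tietze extension theorem. Fix $g\in\mes{\mathcal K}$ and $\epsilon>0$; it suffices to produce $f\in\co{\mathcal K}$ together with some $\delta$ with $0<\delta<\epsilon$ such that $\mu(\{x\in\mathcal K : |f(x)-g(x)|\ge \delta\})<\delta$, since this immediately gives $\rho_\mu(f,g)\le \delta<\epsilon$.

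First I would note that a finite Borel measure on a compact metric space is automatically inner and outer regular, so Lusin's theorem is applicable to the real-valued (hence finite-valued) measurable function $g$: for the chosen $\delta$ there is a closed (hence compact) set $F\subseteq\mathcal K$ with $\mu(\mathcal K\setminus F)<\delta$ such that the restriction $g|_F$ is continuous on $F$. Next, since a metric space is normal and $F$ is closed, the Tietze extension theorem yields an $f\in\co{\mathcal K}$ with $f|_F=g|_F$. Then $\{x\in\mathcal K : |f(x)-g(x)|\ge \delta\}\subseteq \mathcal K\setminus F$, so its $\mu$-measure is strictly below $\delta$, which is exactly the bound needed. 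As $g$ and $\epsilon$ were arbitrary, $\co{\mathcal K}$ is $\rho_\mu$-dense in $\mes{\mathcal K}$. (This is essentially Lemma A.1 of \cite{hornik1991approximation}, included here for completeness.)

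There is no serious obstacle; the only point requiring a little care is the invocation of Lusin's theorem — one must check that the hypotheses (finiteness of $\mu$, compactness and metrizability of $\mathcal K$, and the fact that $g$ takes values in $\real$ rather than the extended reals) really do supply the regularity of $\mu$ that Lusin's theorem needs. An alternative route avoiding Lusin would be to first approximate $g$ in $\rho_\mu$ by simple functions and then approximate each indicator $\chi_B$ of a Borel set $B$ by a continuous function, using regularity to sandwich $B$ between a compact set and an open set and Urysohn's lemma in between; this is more laborious, so I would prefer the Lusin/Tietze argument.
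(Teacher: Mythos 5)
Your proof is correct. The paper itself gives no argument for this lemma --- it simply cites Lemma A.1 of \cite{hornik1991approximation} --- and your Lusin-plus-Tietze argument is precisely the standard proof behind that citation (the regularity of a finite Borel measure on a compact metric space that Lusin's theorem requires is automatic here, as you note, and choosing $\delta<\epsilon$ with $\{x : |f(x)-g(x)|\ge\delta\}\subseteq\mathcal K\setminus F$ correctly yields $\rho_\mu(f,g)\le\delta<\epsilon$).
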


\subsection{Multi-instance neural networks}
In Multi-instance learning it is assumed that a sample $\mathbf{x}$ consists of multiple vectors of a fixed dimension, i.e. $\mathbf{x} = \{x_1, \, \dots ,\, x_l \}$, $x_i \in \real^d$. Furthermore, it is assumed that labels are provided on the level of samples $\mathbf{x}$, rather than on the level of individual instances $x_i.$ 

To adapt feed-forward neural networks to MIL problems, the following construction has been proposed in~\cite{pevny2017using,edwards2017towards}. Assuming mean aggregation function, the network consists of two feed-forward neural networks $\phi : \real^d \rightarrow \real^k $ and $\psi : \real^k \rightarrow \real^o.$ The output of function is calculated as follows:
\begin{equation}
\label{eq:simplemil}
	f(\mathbf x) = \psi\left(\frac{1}{l}\sum_{i=1}^l\phi(x_i)\right),
\end{equation}
where $d,$ $k,$ $o$ is the dimension of the input, output of the first neural network, and the output.
This construction also allows the use of other aggregation functions such as maximum.

The general definition of a MIL problem~\cite{muandet2012learning} adopted here views instances $x_i$ of a single sample $\mathbf{x}$ as realizations of a random variable with distribution $p \in \pspace,$ where $\pspace$ is a set of probability measures on \xspace. This means that the sample is not a single vector but a probability distribution observed through a finite number of realizations of the corresponding random variable.

The main result of Section~\ref{sec:probabilistic} is that the set of neural networks with (i) $\phi$ being a single non-linear layer, (ii) $\psi$ being one non-linear layer followed by a linear layer, and (iii) the aggregation function being mean as in Equation~\eqref{eq:simplemil} is dense in the space \co{\pspace} of continuous functions on any compact set of probability measures. Lemma~\ref{lem:ctom} extends the result to the space of measurable functions.

The theoretical analysis assumes functions $f: \pspace \rightarrow \real$ of the form 
\begin{equation}
 f(p) = \psi\left(\int\phi(x)\textrm{d}p(x)\right),
\end{equation}
whereas in practice $p$ can only be observed through a finite set of observations ${\mathbf{x}} = \{x_i\sim p | i\in \{1,\dots,l\}\}.$ This might seem as a discrepancy, but the sample $\textbf{x}$ can be interpreted as a mixture of Dirac probability measures $p_{\mathbf{x}}=\frac1l\sum_{i=1}^l \delta_{x_i}.$ By definition of $p_{\mathbf{x}}$, we immediatelly get
$$\int \phi(x) \textrm{d}p_{\mathbf{x}}(x) = \int \frac1l \sum_{i=1}^l \phi(x) \textrm{d}\delta_{x_i}(x) = \frac 1l \sum_{i=1}^l \phi(x_i),$$
from which it easy to recover Equation~\eqref{eq:simplemil}. 
Since $p_{\mathbf{x}}$ approaches $p$ as $l$ increases, $f(\mathbf{x})$ can be seen as an estimate of $f(p).$ Indeed, if the non-linearities in neural networks implementing functions $\phi$ and $\psi$ are continuous, the function $f$ is bounded and from Hoeffding's inequality~\cite{hoeffding1963probability} it follows that $P(|f(p) - f(\mathbf x)| \geq t ) \leq 2 \exp (-ct^2l^2)$ for some constant $c > 0.$
%!TEX root = notes.tex
\section{Universal approximation theorem for probability spaces}
\label{sec:probabilistic}
To extend Theorem~\ref{th:lechno} to spaces of probability measures, the following definition introduces the set of functions which represent the layer that embedds probability measures into $\real$.

\begin{definition}
For any \xspace\ and set of functions $\mathcal{F} \subset \{f: \xspace \rightarrow \real\},$ we define $\asetf$ as
\begin{equation}
\asetf= \left\{ f: \pspace \rightarrow \real \left| \ f(\pinst)= b + \sum_{i=1}^{m} w_{i} \int_{\xspace} f_{i} (x) \textnormal{\textrm{d}}\pinst(x), m \in \mathbb{N}, w_{i},b \in \real, f_{i} \in \mathcal{F} \right. \right\}.
\end{equation}
\end{definition}
$\asetf$ can be viewed as an analogy of affine functions defined by Equation~\eqref{eq:affine} in the context of probability measures $\pspace$ on \xspace. 

\begin{remark}    
Let $\xspace \subset \real^d$ and suppose that $\mathcal{F}$ only contains the basic projections $\pi_i : x\in \real^d \mapsto x_i \in \real$.
If $\pspace = \left\{\delta_x \vert x\in \xspace \right\}$ is the set of Dirac measures, then \asetf coincides with $\aset^d$.
\end{remark}

Using \asetf, the following definition extends the $\Sigma$-networks from Theorem~\ref{th:lechno} to probability spaces.

\begin{definition}[$\Sigma$-networks] For any $\xspace,$ set of functions $\mathcal{F} = \lbrace f: \xspace \rightarrow \real\rbrace,$ and a measurable function $\sigma: \real\rightarrow \real,$ let $\Sigma(\sigma,\asetf)$ be class of functions $f: \pspace \rightarrow \real$
\begin{equation}
\Sigma(\sigma,\asetf) = \left\{ f: \pspace \rightarrow \real \left\vert \ f (\pinst) = \sum_{i=1}^n \alpha_i \sigma(a_i(\pinst)) , n \in \mathbb{N}, \alpha_i \in \real, a_i \in \asetf  \right.\right\}.
\end{equation}
\end{definition}

The main theorem of this work can now be presented. As illustrated in a corollary below, when applied to $\mathcal F = \Sigma(\sigma, \mathcal A^d)$ it states that three-layer neural networks, where first two layers are non-linear interposed with an integration (average) layer, allow arbitrarily precise approximations of continuous function on $\pspace$. (In other words this class of networks is dense in $\co{\pspace}$.) 

\begin{theorem}
\label{th:mlp}
Let $\pspace$ be a compact set of Borel probability measures on a metric space \xspace, $\mathcal{F}$ be a set of continuous functions dense in  $\co{\xspace},$ and finally $\sigma: \real \rightarrow \real$ be a measurable non-polynomial function. Then the set of functions $\Sigma(\sigma,\asetf)$ is dense in \co{\pspace}.
\end{theorem}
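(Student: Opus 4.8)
The plan is to reduce the statement to the classical universal approximation theorem on $\real^d$ (Theorem~\ref{th:lechno}) by exploiting compactness of $\pspace$ to replace the infinite-dimensional embedding $p \mapsto (\int f \, \textnormal{d}p)_{f\in\mathcal F}$ with a finite-dimensional one. Concretely, fix $g \in \co{\pspace}$ and $\varepsilon > 0$. First I would choose finitely many functions $f_1,\dots,f_m \in \mathcal F$ such that the map $\Phi : \pspace \to \real^m$, $\Phi(p) = (\int f_1 \,\textnormal{d}p, \dots, \int f_m \,\textnormal{d}p)$, is "fine enough" that $g$ factors approximately through it: i.e. there is a continuous $h : \real^m \to \real$ with $|g(p) - h(\Phi(p))| < \varepsilon/2$ for all $p \in \pspace$. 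Granting this, Theorem~\ref{th:lechno} applied to $h$ on the compact set $\Phi(\pspace) \subset \real^m$ (extended continuously to all of $\real^m$) yields a $\Sigma$-network $\sum_i \alpha_i \sigma(w_i^{\mathrm T} y + b_i)$ approximating $h$ to within $\varepsilon/2$ uniformly on $\Phi(\pspace)$. Composing with $\Phi$ gives $\sum_i \alpha_i \sigma\big(\sum_j (w_i)_j \int f_j \,\textnormal{d}p + b_i\big)$, which is exactly an element of $\Sigma(\sigma,\asetf)$ since each inner function $p \mapsto \sum_j (w_i)_j \int f_j\,\textnormal{d}p + b_i$ lies in $\asetf$. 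This function is within $\varepsilon$ of $g$, completing the argument.

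The main obstacle is the first step: producing the finite family $f_1,\dots,f_m$ and the factorization $g \approx h \circ \Phi$. The key tool is that the full embedding $p \mapsto (\int f\,\textnormal{d}p)_{f\in\mathcal F}$ separates points of $\pspace$ — if $\int f\,\textnormal{d}p = \int f\,\textnormal{d}q$ for all $f \in \mathcal F$, then by density of $\mathcal F$ in $\co{\xspace}$ the same holds for all continuous bounded $f$, forcing $p = q$. I would argue as follows. For each pair of distinct $p, q \in \pspace$ there is $f \in \mathcal F$ with $\int f\,\textnormal{d}p \neq \int f\,\textnormal{d}q$; by continuity of $p \mapsto \int f\,\textnormal{d}p$ (which holds because $\pspace$ carries a topology making these integrals continuous — e.g. the $w^\star$ topology, or more generally this is part of the standing assumption that $\mathcal F \subset \co{\xspace}$ integrates continuously against elements of $\pspace$) this inequality persists on a neighborhood. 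A compactness argument on $\pspace \times \pspace$ minus a neighborhood of the diagonal then extracts finitely many $f_1,\dots,f_m$ whose joint embedding $\Phi$ is injective on $\pspace$; since $\pspace$ is compact and $\Phi$ continuous, $\Phi$ is a homeomorphism onto its image $\Phi(\pspace) \subset \real^m$. Then $h := g \circ \Phi^{-1}$ is a well-defined continuous function on the compact set $\Phi(\pspace)$, and $g = h \circ \Phi$ holds exactly (no $\varepsilon/2$ needed at this stage), which even simplifies the bookkeeping above.

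A couple of technical points deserve care. One must verify that $p \mapsto \int f \, \textnormal{d}p$ is genuinely continuous on $\pspace$ for $f \in \mathcal F \subset \co{\xspace}$; this is automatic for the $w^\star$ topology and should be assumed (or is implicit) for whichever topology is in play, since otherwise $\co{\pspace}$ and $\asetf$ would not interact sensibly. One must also extend $h$ from the compact set $\Phi(\pspace)$ to all of $\real^m$ continuously before invoking Theorem~\ref{th:lechno}; this is the Tietze extension theorem, and the excerpt already notes the analogous extension step for $\co{\mathcal K}$ with $\mathcal K \subset \real^d$ compact. Finally, $\Sigma(\sigma, \asetf)$ is closed under the operations used (finite sums, affine pre-composition with elements of $\asetf$ — and $\asetf$ is itself an affine-type class closed under affine combinations), so the constructed approximant is legitimately a member of the claimed family. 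Assembling these pieces gives the density of $\Sigma(\sigma,\asetf)$ in $\co{\pspace}$.
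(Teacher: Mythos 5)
Your overall architecture (embed $\pspace$ into $\real^m$ by finitely many integrals, approximate the outer function via Theorem~\ref{th:lechno}, compose, and check membership in $\Sigma(\sigma,\asetf)$) matches the paper's, and your final assembly step is correct. The genuine gap is in how you produce the finite family $f_1,\dots,f_m$. Your compactness argument on $\pspace\times\pspace$ minus a $\delta$-neighbourhood of the diagonal only yields functions separating pairs of measures at distance at least $\delta$; pairs closer than $\delta$ are not controlled, so the resulting $\Phi$ need not be injective, and letting $\delta\to 0$ costs you finiteness. Worse, the injectivity you want is false in general: for $\xspace=[0,1]$ and $\pspace=\mathcal P(\xspace)$, choosing distinct points $a_n,b_n$ in disjoint subintervals, the map $t\mapsto\sum_n 2^{-n}\bigl(t_n\delta_{a_n}+(1-t_n)\delta_{b_n}\bigr)$ embeds the Hilbert cube $[0,1]^{\mathbb N}$ homeomorphically into $\pspace$, so no continuous injection of $\pspace$ into any $\real^m$ exists and the ``exact factorization $g=h\circ\Phi^{-1}$'' route cannot work. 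The fallback in your first paragraph (approximate factorization $|g-h\circ\Phi|<\varepsilon/2$) is the right idea but is precisely the part you leave unproved: even after arranging that $\|\Phi(p)-\Phi(q)\|<\eta$ forces $|g(p)-g(q)|<\varepsilon/2$, one must still exhibit a \emph{continuous} $h$ on $\Phi(\pspace)$ realizing this, and the naive choice $h(y)=g(p)$ for some $p\in\Phi^{-1}(y)$ is only semicontinuous in general.

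The paper sidesteps all of this by obtaining the finite-dimensional factorization as a byproduct of approximation rather than as a preliminary step: it first shows (Theorem~\ref{th:sumproduct}) that the algebra $\Sigma\Pi(\mathcal F)$ of finite sums of products of integrals is dense in $\co{\pspace}$ via Stone--Weierstrass, with separation of points supplied by Lemma~\ref{lem:dudley} together with density of $\mathcal F$ in $\co{\xspace}$. Any element of $\Sigma\Pi(\mathcal F)$ then factors \emph{exactly}, by its very form, as $g\circ h$ with $h$ a finite vector of integrals and $g$ a polynomial-type map on $\real^s$; Theorem~\ref{th:lechno} and Lemma~\ref{lem:transitivity} finish exactly as in your last step. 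To repair your outline you would need to replace the injectivity claim by such a Stone--Weierstrass (or equivalent) argument that directly delivers the approximate finite-dimensional factorization.
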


Using Lemma~\ref{lem:ctom}, an immediate corollary is that a similar result holds for measurable funcitons:
\begin{corollary}[Density of MIL NN in \mes{\pspace}]
Under the assumptions of Theorem~\ref{th:mlp}, $\Sigma(\sigma,\asetf)$ is $\rho_\mu$-dense in $\mes{\pspace}$ for any finite Borel measure $\mu$ on $\mathcal X$.
\end{corollary}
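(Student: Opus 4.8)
The plan is to deduce this corollary from Theorem~\ref{th:mlp} and Lemma~\ref{lem:ctom} by transitivity of density; no genuinely new ingredient is needed, as all the substance lives in Theorem~\ref{th:mlp}. (I read the hypothesis as ``$\mu$ a finite Borel measure on $\pspace$'', which is what is needed for $\rho_\mu$ to be a metric on $\mes{\pspace}$.) The first observation is that, under the standing assumptions, $\pspace$ is a compact metric space, so Lemma~\ref{lem:ctom} applies with $\mathcal K := \pspace$.

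Fix $g\in\mes{\pspace}$ and $\epsilon>0$; the goal is to find $h\in\Sigma(\sigma,\asetf)$ with $\rho_\mu(h,g)<\epsilon$. Step one: Lemma~\ref{lem:ctom} gives $f\in\co{\pspace}$ with $\rho_\mu(f,g)<\epsilon/2$. Step two: by Theorem~\ref{th:mlp}, $\Sigma(\sigma,\asetf)$ is dense in $\co{\pspace}$ for the supremum metric, so for any prescribed $\delta>0$ there is $h\in\Sigma(\sigma,\asetf)$ with $\sup_{p\in\pspace}|h(p)-f(p)|\le\delta$. Step three: convert this uniform bound into a $\rho_\mu$ bound --- if $\sup_{p}|h(p)-f(p)|\le\delta$ then for every $\epsilon'>\delta$ the set $\{p\in\pspace : |h(p)-f(p)|\ge\epsilon'\}$ is empty, hence has $\mu$-measure $0<\epsilon'$, so $\rho_\mu(h,f)\le\delta$ directly from Definition~\ref{def:rhomu} (this is the quantitative form of the note preceding Lemma~\ref{lem:ctom}, i.e.\ \cite[L.\,A.1]{hornik1991approximation}). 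Choosing $\delta<\epsilon/2$ and invoking the triangle inequality for the metric $\rho_\mu$ yields $\rho_\mu(h,g)\le\rho_\mu(h,f)+\rho_\mu(f,g)<\epsilon$, which is the claim.

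The remaining checks are routine: that $\Sigma(\sigma,\asetf)\subseteq\mes{\pspace}$, so that ``$\rho_\mu$-dense in $\mes{\pspace}$'' is meaningful (true because $\sigma$ is measurable, each $a_i\in\asetf$ is a measurable --- indeed $w^\star$-continuous whenever the $f_i$ are bounded --- map on $\pspace$, and finite sums and compositions of measurable maps are measurable), together with the elementary set-measure inequality used in Step three and the triangle inequality for $\rho_\mu$. I do not expect any real obstacle in this reduction: were something to go wrong, it would have to be inside Theorem~\ref{th:mlp}, not in the present deduction. Accordingly, I would expect the author's proof to be essentially the one-liner ``combine Theorem~\ref{th:mlp} with Lemma~\ref{lem:ctom}''.
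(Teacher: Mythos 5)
Your proposal is correct and matches the paper's intent exactly: the paper offers no separate proof, simply stating that the corollary follows immediately by combining Theorem~\ref{th:mlp} with Lemma~\ref{lem:ctom}, which is precisely the reduction you carry out (and your reading of the hypothesis as a measure on $\pspace$ rather than $\xspace$ is the sensible correction of what appears to be a typo in the statement). Your explicit verification that uniform closeness implies $\rho_\mu$-closeness and the triangle-inequality step are exactly the routine details the paper leaves implicit.
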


The proof of Theorem~\ref{th:mlp} is similar to the proof of Theorem 2.4 from~\cite{hornik1991approximation}.
One of the ingredients of the proof is the classical Stone-Weierstrass theorem~\cite{stone48generalized}. Recall that a collection of functions is an algebra if it is closed under multiplication and linear combinations.

\paragraph{Stone-Weierstrass Theorem.}\emph{ Let $\mathcal{A} \subset \co{\mathcal K}$ be an algebra of functions on a compact $\mathcal{K}$. If
\begin{enumerate}[(i)]
\item $\mathcal{A}$ separates points in $\mathcal{K}$: $(\forall x,y \in K,\ x\neq y) (\exists f\in \mathcal A) : f(x)\neq f(y)$ and
\item $\mathcal{A}$ vanishes at no point of $\mathcal{K}$: $(\forall x\in K)(\exists f\in\mathcal A) : f(x) \neq 0$,
\end{enumerate}
then the uniform closure of $\mathcal{A}$ is equal to $\co{\mathcal{K}}.$}

Since $\Sigma(\sigma,\mathcal A^{\mathcal F})$ is not closed under multiplication, we cannot apply the SW theorem directly. Instead, we firstly prove the density of the class of $\Sigma\Pi$ networks (Theorem~\ref{th:sumproduct}) which does form an algebra, and then we extend the result to $\Sigma$-networks.

\begin{theorem}
\label{th:sumproduct}
Let \pspace\ be a compact set of Borel probability measures on a metric space \xspace, and $\mathcal{F}$ be a dense subset of $\co{\xspace}.$ Then the following set of functions is dense in $\co{\pspace}$:
$$\Sigma\Pi(\mathcal{F}) = \left\{ f: \pspace \rightarrow \real \bigg \vert \  f(p) = \sum_{i=1}^n \alpha_i \prod_{j=1}^{l_i} \int f_{ij}\textnormal{\textrm{d}}p, \, n, l_i \in \mathbb{N}, \alpha_i \in \real, f_{ij} \in \mathcal{F}  \right\} .$$ 
\end{theorem}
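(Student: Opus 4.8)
The plan is to apply the Stone--Weierstrass theorem to $\Sigma\Pi(\mathcal{F})$, regarded as a subalgebra of $\co{\pspace}$. Four things need to be verified: that every element of $\Sigma\Pi(\mathcal{F})$ is a continuous function on $\pspace$, that $\Sigma\Pi(\mathcal{F})$ is an algebra, that it separates points of $\pspace$, and that it vanishes at no point of $\pspace$. Stone--Weierstrass then yields that the uniform closure of $\Sigma\Pi(\mathcal{F})$ is all of $\co{\pspace}$, which is the claim.

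For the membership in $\co{\pspace}$, each generator $L_f \colon p \mapsto \int_{\xspace} f\,\textrm{d}p$ with $f\in\mathcal F$ is a well-defined continuous real function on the compact space $\pspace$, and finite sums and products of continuous functions are continuous, so $\Sigma\Pi(\mathcal F)\subseteq\co{\pspace}$. That $\Sigma\Pi(\mathcal F)$ is an algebra is a formal check: it is visibly closed under linear combinations, and the product of $\sum_i \alpha_i\prod_j\int f_{ij}\,\textrm{d}p$ with $\sum_k\beta_k\prod_m\int g_{km}\,\textrm{d}p$, once the double sum is expanded, is again a finite linear combination of finite products of integrals of functions from $\mathcal F$, hence lies in $\Sigma\Pi(\mathcal F)$.

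Both remaining conditions rest on the estimate $\left|\int_{\xspace} f\,\textrm{d}p - \int_{\xspace} h\,\textrm{d}p\right| \le \|f-h\|_{\sup}$, valid for any probability measure $p$, which transfers uniform approximations of a target function by elements of $\mathcal F$ to approximations of the corresponding integration functionals. To see that $\Sigma\Pi(\mathcal F)$ vanishes nowhere, fix $p\in\pspace$ and choose $f\in\mathcal F$ with $\|f-\mathbf 1\|_{\sup}<\tfrac{1}{2}$; then $L_f(p)=\int_{\xspace} f\,\textrm{d}p>\tfrac{1}{2}>0$ and $L_f\in\Sigma\Pi(\mathcal F)$. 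For point separation, let $p\neq q$ in $\pspace$; since two distinct Borel probability measures on a metric space must differ on the integral of some bounded continuous function $h$ (write the indicator of a closed set as a decreasing pointwise limit of bounded continuous functions and invoke a $\pi$-system/Dynkin uniqueness argument, closed sets forming a generating $\pi$-system that contains $\xspace$), we get $\int_{\xspace} h\,\textrm{d}p \neq \int_{\xspace} h\,\textrm{d}q$; picking $f\in\mathcal F$ uniformly close enough to $h$ keeps this inequality strict, so that $L_f\in\Sigma\Pi(\mathcal F)$ separates $p$ and $q$.

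I expect the separation step to be the main obstacle, because it is where the topology carried by $\pspace$ and the merely-metric structure of $\xspace$ must be reconciled: one needs the integration functionals $L_f$ to be genuinely continuous on $\pspace$, the integrals $\int_{\xspace} f\,\textrm{d}p$ to be finite for all $p\in\pspace$, and the fact that a Borel probability measure on a metric space is determined by its integrals against bounded continuous functions. Granting these, the four hypotheses of Stone--Weierstrass hold and the density of $\Sigma\Pi(\mathcal F)$ in $\co{\pspace}$ follows.
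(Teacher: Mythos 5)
Your proposal is correct and follows essentially the same route as the paper: both verify the algebra, separation, and non-vanishing hypotheses of Stone--Weierstrass, using the bound $|\int f\,\textnormal{\textrm{d}}p-\int h\,\textnormal{\textrm{d}}p|\le\|f-h\|_{\sup}$ to transfer density of $\mathcal F$ in $\co{\xspace}$ to the integration functionals. The only cosmetic difference is that you sketch a proof of the key separation fact (distinct Borel probability measures differ on the integral of some bounded continuous function) via a $\pi$-system argument, whereas the paper simply cites it as Lemma 9.3.2 of Dudley.
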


The proof shall use the following immediate corollary of Lemma 9.3.2 from~\cite{dudley2002}.
\begin{lemma}[Lemma 9.3.2 of~\cite{dudley2002}]
\label{lem:dudley}
Let $(\mathcal{K},\rho)$ be a metric space and let $p$ and $q$ be two Borel probability measures on $\mathcal{K}.$ If $p\neq q$, then we have $\int f \textnormal{\textrm{d}}p \neq \int f \textnormal{\textrm{d}}q$ for some $f\in \co{\mathcal{K}}$.
\end{lemma}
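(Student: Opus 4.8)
The plan is to argue by contraposition. Suppose toward a contradiction that no $f \in \co{\mathcal{K}}$ separates $p$ and $q$; in particular $\int f \, \textrm{d}p = \int f \, \textrm{d}q$ for every \emph{bounded} $f \in \co{\mathcal{K}}$. I will show this forces $p = q$, contradicting $p \neq q$. Restricting to bounded continuous functions loses nothing for the conclusion of the lemma (the separating function I construct will itself be bounded) and has the advantage that every integral involved is automatically finite, even when $\mathcal{K}$ is not compact.

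The core of the argument is to recover the values of $p$ and $q$ on closed sets from integrals against continuous functions. Fix a closed set $C \subseteq \mathcal{K}$ and recall that the distance function $x \mapsto \rho(x,C) = \inf_{y\in C} \rho(x,y)$ is $1$-Lipschitz, hence continuous. For each $n \in \mathbb{N}$ define $f_n(x) = \max\!\left(0,\, 1 - n\,\rho(x,C)\right)$. Each $f_n$ is continuous, takes values in $[0,1]$, and decreases pointwise to the indicator $\mathbf{1}_C$ as $n \to \infty$: it equals $1$ on $C$ (where $\rho(x,C)=0$) and tends to $0$ at every point outside the closed set $C$ (where $\rho(x,C)>0$). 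By dominated convergence, with the integrable constant function $1$ as dominating function since $p$ and $q$ are probability measures, I obtain $\int f_n \, \textrm{d}p \to p(C)$ and $\int f_n \, \textrm{d}q \to q(C)$. As each $f_n$ is a bounded continuous function, the assumption gives $\int f_n \, \textrm{d}p = \int f_n \, \textrm{d}q$ for every $n$, and hence $p(C) = q(C)$ for all closed sets $C$.

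It remains to upgrade agreement on closed sets to equality of the two measures. The closed subsets of $\mathcal{K}$ form a $\pi$-system (they are closed under finite intersection) that generates the Borel $\sigma$-algebra, and $p(\mathcal{K}) = q(\mathcal{K}) = 1$ because both measures are probability measures. Dynkin's $\pi$-$\lambda$ theorem then yields $p = q$, the desired contradiction.

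I expect the main obstacle to be the approximation step. One must check carefully that the Lipschitz cutoffs $f_n$ converge to $\mathbf{1}_C$ precisely because $C$ is closed, so that each point outside $C$ has strictly positive distance and is eventually sent to $0$, and that dominated convergence is applicable thanks to the finiteness of $p$ and $q$. The passage from closed sets to all Borel sets through the $\pi$-$\lambda$ theorem is then routine, as is the observation that every $f_n$ is bounded and therefore integrable against any probability measure regardless of whether $\mathcal{K}$ is compact.
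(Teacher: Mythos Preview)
Your proof is correct. The paper does not supply its own proof of this lemma; it simply cites it as (an immediate corollary of) Lemma~9.3.2 in Dudley's textbook, so there is nothing in the paper to compare against. Your argument---approximating $\mathbf 1_C$ for closed $C$ by the Lipschitz cutoffs $f_n(x)=\max(0,1-n\rho(x,C))$, passing to the limit via dominated convergence, and then invoking the $\pi$--$\lambda$ theorem on the $\pi$-system of closed sets---is precisely the standard textbook proof one finds in Dudley and elsewhere. One cosmetic remark: you describe the argument as ``toward a contradiction,'' but as you correctly note at the outset it is really contraposition; no contradiction is needed once you conclude $p=q$.
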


\begin{proof}[Proof of Theorem~\ref{th:sumproduct}]
Since $\Sigma\Pi(\mathcal{F})$ is clearly an algebra of continuous functions on $\pspace$, it suffices to verify the assumptions of the SW theorem (separation and non-vanishing properties).

(i) \emph{Separation:} Let $\pinst_1,\pinst_2\in \pspace$ be distinct. By Lemma~\ref{lem:dudley} there is some $\epsilon > 0$ and $f\in \co{\xspace}$ such that $\int f d\pinst_1 - \int f d\pinst_2 = 3\epsilon.$ Since $\mathcal{F}$ is dense in $\co{\xspace}$, there exists $g\in\mathcal{F}$ such that $\max_{x\in\xspace}|f(x) - g(x)| < \epsilon.$ Using triangle inequality yields
\begin{align*}
\left|\int f \textnormal{\textrm{d}}\pinst_1 - \int f \textnormal{\textrm{d}}\pinst_2\right| = & \left|\int f(x) - g(x) + g(x) \textnormal{\textrm{d}}\pinst_1(x) - \int f(x) - g(x) + g(x) \textnormal{\textrm{d}}\pinst_2(x) \right|\\
 \leq & \left|\int f(x) - g(x) \textnormal{\textrm{d}}\pinst_1(x) \right|   + \left| \int f(x) - g(x) \textnormal{\textrm{d}}\pinst_2(x)\right|\\
 & + \left|\int g(x) \textnormal{\textrm{d}}\pinst_1(x) - \int g(x) \textnormal{\textrm{d}}\pinst_2(x) \right|\\
\leq & \ 2\epsilon + \left|\int g \textnormal{\textrm{d}}\pinst_1 - \int g \textnormal{\textrm{d}}\pinst_2 \right|\\
\end{align*}
Denoting $f_g(p) = \int g \textnormal{\textrm{d}}\pinst,$ it is trivial to see that $f_g \in \Sigma\Pi(\mathcal{F}).$ It follows that $\epsilon \leq \left| f_g(\pinst_1) - f_g(\pinst_2) \right|$, implying that  $\Sigma\Pi(\mathcal{F})$ separates the points of \xspace.

(ii) \emph{Non-vanishing:} Let $\pinst \in \pspace.$ Choose $f\in \co{\xspace}$ such that $\int f(x) = 1.$ Since $\mathcal{F}$ is dense in $\co{\xspace}$ there exists $g\in\mathcal{F}$ such that $\max_{x\in\xspace}|f(x) - g(x)| \leq \frac12.$ Since $\int |f-g| \textnormal{\textrm{d}} \pinst \leq \frac 12$, we get
\begin{align*}
1 = \int f\textnormal{\textrm{d}}\pinst & = \int \left(f - g + g\right) \textnormal{\textrm{d}}\pinst = \int (f(x)-g(x)) \textnormal{\textrm{d}}\pinst(x) + \int g \, \textnormal{\textrm{d}}\pinst \\
& \leq \int |f(x) - g(x)| \textnormal{\textrm{d}}\pinst (x) + \int g \, \textnormal{\textrm{d}}\pinst \leq \frac12 +  \int g \,\textnormal{\textrm{d}}\pinst.
\end{align*}
Denote $f_g(q) = \int g\, \textnormal{\textrm{d}}q,$ $f_g \in \Sigma\Pi(\mathcal{F})$. It follows that $f_g(p) \geq \frac12 $, and hence $\Sigma\Pi(\mathcal{F})$ vanishes at no point of \pspace.

Since the assumptions of SW theorem are satisfied, $\Sigma\Pi(\mathcal{F})$ is dense in $\co{\pspace}.$    
\end{proof}

% (ii) \emph{Non-vanishing:} Let $\pinst \in \pspace$ and let $f\in \co{\xspace}$ is such that $\forall x \in \xspace, f(x) = 2\epsilon > 0.$ Because $\mathcal{F}$ is dense in $\co{\xspace}$ there exists $g\in\mathcal{F}$ such that $\max_{x\in\xspace}|f(x) - g(x)| < \epsilon.$ We have that 
% \begin{align*}
% \int f(x)d\pinst(x) & = \int f(x) - g(x) + g(x) d\pinst(x) \leq \int |f(x) - g(x)| d\pinst (x) + \int g(x) d\pinst(x) \\
% & \leq \epsilon +  \int g(x) d\pinst(x).
% \end{align*}
% Therefore $\int g(x) d\pinst(x) \geq \epsilon$ and again $g \in \Sigma\Pi(\mathcal{F}),$ which implies that $\Sigma\Pi(\mathcal{F})$ vanishes at no point in \pspace.

% Since preconditions of SW theorem are satisfied, the $\Sigma\Pi(\mathcal{F})$ is dense in $\co{\pspace}.$    
% \end{proof}

The following simple lemma will be useful in proving Theorem~\ref{th:mlp}.
\begin{lemma}
\label{lem:transitivity}
If $\mathcal{G}$ is dense in $\co{\mathcal{Y}},$ then for any $h : \mathcal{X}\rightarrow \mathcal{Y}$, the collection of functions $\{g \circ h | \, g \in \mathcal{G}\}$ is dense in $\{\phi \circ h | \, \phi \in \co{\mathcal{Y}}\}.$
\end{lemma}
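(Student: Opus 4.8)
The plan is to exploit that precomposition with a fixed map is non-expansive for the supremum distance, so uniform density is transported through $h$ essentially for free. I would fix an arbitrary target $\phi\circ h$ (with $\phi\in\co{\mathcal{Y}}$) and an accuracy $\epsilon>0$, and aim to produce some $g\in\mathcal{G}$ with $\sup_{x\in\mathcal{X}}|g(h(x))-\phi(h(x))|<\epsilon$. I would first note that $\{g\circ h\mid g\in\mathcal{G}\}$ is a subset of $\{\phi\circ h\mid \phi\in\co{\mathcal{Y}}\}$ (take $\phi=g$), so that the claimed density statement is meaningful.

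Next I would invoke the hypothesis: since $\mathcal{G}$ is dense in $\co{\mathcal{Y}}$ under uniform convergence, there is $g\in\mathcal{G}$ with $\sup_{y\in\mathcal{Y}}|g(y)-\phi(y)|<\epsilon$. The one substantive step is then the estimate
\[
\sup_{x\in\mathcal{X}}\bigl|g(h(x))-\phi(h(x))\bigr|\;=\;\sup_{y\in h(\mathcal{X})}\bigl|g(y)-\phi(y)\bigr|\;\le\;\sup_{y\in\mathcal{Y}}\bigl|g(y)-\phi(y)\bigr|\;<\;\epsilon,
\]
which holds purely because $h(\mathcal{X})\subseteq\mathcal{Y}$ and the supremum over a smaller set is no larger. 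Since $\phi$ and $\epsilon$ were arbitrary, this yields density of $\{g\circ h\mid g\in\mathcal{G}\}$ in $\{\phi\circ h\mid\phi\in\co{\mathcal{Y}}\}$.

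I do not expect any genuine obstacle here: no assumption on $h$ is needed beyond $h(\mathcal{X})\subseteq\mathcal{Y}$ — in particular $h$ need not be continuous, injective, or surjective — and the entire content is the monotonicity of the supremum under restriction from $\mathcal{Y}$ to $h(\mathcal{X})$. The only point to state carefully is the topology involved, namely uniform convergence (the supremum metric) on each of the two function spaces over their respective domains; with that fixed, the proof is immediate, and the same one-line argument would survive verbatim if one replaced uniform convergence by a coarser mode such as $\rho_\mu$-convergence.
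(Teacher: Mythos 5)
Your proof is correct and is essentially identical to the paper's own argument: both choose $g\in\mathcal{G}$ uniformly $\epsilon$-close to $\phi$ on $\mathcal{Y}$ and then observe that the supremum over $h(\mathcal{X})\subseteq\mathcal{Y}$ can only be smaller. Your write-up is in fact slightly more careful than the paper's (which contains a typo, writing $f(x)$ for $g(h(x))$), so no changes are needed.
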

\begin{proof}
Let $g \in \co{\mathcal{Y}}$ and $g^* \in \mathcal{G}$ be such that $\max_{y \in \mathcal{Y}} |g(y) - g^*(y)| \leq \epsilon.$ Then we have
\begin{equation}
\max_{x \in \mathcal{X}}    |f(x) - g^*(h(x))| = \max_{x \in \mathcal{X}}    |g(h(x)) - g^*(h(x))| \leq \max_{y \in \mathcal{Y}} |g(y) - g^*(y)| \leq \epsilon,
\end{equation}
which proves the lemma.
\end{proof}

\begin{proof}[Proof of Theorem~\ref{th:mlp}]
Theorem~\ref{th:mlp} is a consequence of Theorem~\ref{th:sumproduct} and $\Sigma$-networks being dense in $\co{\real^k}$ for any $k.$ 

Let $\mathcal{X}, \mathcal{F}, \pspace$, and $\sigma$ be as in the assumptions of the theorem. Let $f^*\in\co{\pspace}$ and fix $\epsilon >0.$ Then, there exist $f \in \Sigma\Pi(\mathcal{F})$ such that $\max_{p\in\pspace} |f(p) - f^*(p)| \leq \frac{\epsilon}{2}.$ This function is of the form
$$f(p) = \sum_{i=1}^n\alpha_i\prod_{j=1}^{l_i}\int f_{ij}\textnormal{\textrm{d}}p$$ 
for some $\alpha_i \in \real$ and $f_{ij}\in\mathcal{F}.$
Moreover $f$ can be written as a composition $f = g \circ h$, where
% $h: \pspace \rightarrow \real^s$ as $g: \real^s \rightarrow \real$ as:
\begin{alignat}{2}
h : p\in \pspace \mapsto \left(\int f_{11}\textnormal{\textrm{d}}p, \int f_{12}\textnormal{\textrm{d}}p,\ldots,\int f_{nl_n}\textnormal{\textrm{d}}p\right),\label{eq:embedding}\\
g : (x_{11},x_{12},\dots ,x_{nl_n}) \mapsto \sum_{i=1}^n\alpha_i\prod_{j=1}^{l_i}x_{ij} \in \real. 
\end{alignat}

\noindent Denoting $s=\sum_{i=1}^n l_i$, we identify the range of $h$ and the domain of $g$ with $\real^s$.

Since $g$ is clearly continuous and $\Sigma(\sigma,\aset^s)$ is dense in $\co{\real^s}$ (by Theorem 1) there exists $\tilde g \in \Sigma(\sigma,\aset^s)$ such that $\max_{y \in \mathcal{Y}} |g(y) - \tilde g(y)| \leq \frac{\epsilon}{2}.$
It follows that $\tilde f := \tilde g \circ h$ satisfies
\begin{alignat*}{2}
\max_{p \in\pspace}|f^*(p) - \tilde f(p)| = & \max_{p \in\pspace}|f^*(p) - f(p) + f(p) - \tilde g(h(p))|\\
\leq & \max_{p \in\pspace}|f^*(p) - f(p)| + \max_{p \in\pspace}|f(p) - \tilde g(h(p))|\\
\leq & \frac{\epsilon}{2} + \frac{\epsilon}{2} = \epsilon
& \textnormal{ (by Lemma~\ref{lem:transitivity})}.
\end{alignat*}

Since $\tilde g\in\Sigma(\sigma,\aset^{s}),$ it is easy to see that $\tilde f$ belongs to $\Sigma(\sigma,\asetf),$ which concludes the proof.
\end{proof}

The function $h$ in the above construction (Equation~\eqref{eq:embedding}) can be seen as a feature extraction layer embedding the space of probability measures into a Euclidean space. It is similar to a mean-map~\cite{smola2007ahilbert,sriperumbudur2008injective} --- a well-established paradigm in kernel machines --- in the sense that it characterizes a class of probability measures but, unlike mean-map, only in parts where positive and negative samples differ.
%!TEX root = notes.tex
\section{Universal approximation theorem for product spaces}
\label{sec:product}
The next result is the extension of the universal approximation theorem to product spaces, which naturally occur in structured data. The motivation here is for example if one sample consists of some real vector $x,$ set of vectors $\left\{x^1_i\right\}_{i=1}^{n_1}$ and another set of vectors $\left\{x^2_i\right\}_{i=1}^{n_2}.$ 

\begin{theorem}
\label{th:product}
Let \xprod\ be a Cartesian product of metric compacts, $\mathcal{F}_i$, $i=1,\dots,l$ be dense subsets of $\co{\xspace_i},$ and $\sigma: \real \rightarrow \real$ be a measurable function which is not an algebraic polynomial.
%locally almost everywhere bounded function.
Then $\Sigma(\sigma,\asetp)$ is dense in $\co{\xprod},where$
\begin{alignat*}{2}
\Sigma(\sigma,\asetp) = \bigg \{f:\xprod \rightarrow \real \bigg| \ f(x_1,\dots,x_l) &=  \sum_{i=1}^n \alpha_i \sigma\bigg(b_i + \sum_{j=1}^l w_{ij}a_{ij}(x_i)\bigg), \\
& \! \! \! n \in \mathbb{N}, \alpha_i, b_i, w_{ij} \in \real, a_{ij}(x) \in \mathcal{F}_j \bigg \}.
\end{alignat*}
\end{theorem}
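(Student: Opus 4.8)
The plan is to follow the template of the proof of Theorem~\ref{th:mlp}: first establish density of an auxiliary \emph{algebra} of ``separable-product'' functions built from the $\mathcal{F}_j$ via the Stone--Weierstrass theorem, and then pass from that algebra to the $\Sigma$-networks $\Sigma(\sigma,\asetp)$ by the same reduction to the Euclidean universal approximation theorem (Theorem~\ref{th:lechno}) that appears in the proof of Theorem~\ref{th:mlp}. Note that $\xprod$, being a finite product of metric compacts, is itself a metric compact, so all the ingredients ($\co{\xprod}$ with the $\sup$ metric, Stone--Weierstrass, Lemma~\ref{lem:transitivity}) apply.

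First I would introduce the analogue of $\Sigma\Pi(\mathcal{F})$, namely the class of finite sums $\sum_{i}\alpha_i\prod_{j=1}^l a_{ij}(x_j)$ with $a_{ij}\in\mathcal{F}_j$, and prove it is dense in $\co{\xprod}$ exactly as in Theorem~\ref{th:sumproduct}: the version built from all of $\co{\xspace_j}$ in place of $\mathcal{F}_j$ is an algebra (closure under products follows by grouping factors coordinatewise, $\prod_j\phi_{ij}(x_j)\cdot\prod_j\phi_{i'j}(x_j)=\prod_j(\phi_{ij}\phi_{i'j})(x_j)$), it separates points of $\xprod$ (if $x\neq y$ then $x_k\neq y_k$ for some $k$, and a function of $x_k$ alone already separates them), and it contains the constants, so Stone--Weierstrass gives density; then replacing each continuous factor by a nearby element of the dense set $\mathcal{F}_j$ (products of uniformly bounded functions depend continuously on the factors) transfers the density to the $\mathcal{F}_j$-version. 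Given $f^*\in\co{\xprod}$ and $\epsilon>0$, I would pick such an $f=\sum_i\alpha_i\prod_j a_{ij}(x_j)$ with $\|f^*-f\|_{\sup}\le\epsilon/2$, write it as $f=g\circ h$ with $h(x_1,\dots,x_l)=(a_{ij}(x_j))_{i\le n,\,j\le l}\in\real^{s}$ ($s=nl$) and $g$ the polynomial $(y_{ij})\mapsto\sum_i\alpha_i\prod_j y_{ij}$, note that $h(\xprod)$ is compact and $g$ continuous, invoke Theorem~\ref{th:lechno} on $\real^{s}$ to obtain $\tilde g\in\Sigma(\sigma,\aset^{s})$ with $\max_{y\in h(\xprod)}|g(y)-\tilde g(y)|\le\epsilon/2$, and conclude via Lemma~\ref{lem:transitivity} and the triangle inequality that $\|f^*-\tilde g\circ h\|_{\sup}\le\epsilon$.

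The hard part --- the only place the product structure demands care beyond the single-space argument --- is the last step: checking that $\tilde f:=\tilde g\circ h$ is genuinely a member of $\Sigma(\sigma,\asetp)$. Writing $\tilde g=\sum_k\beta_k\sigma(\langle w^k,\cdot\rangle+c_k)$, we have $\tilde f(x)=\sum_k\beta_k\,\sigma\big(c_k+\sum_{i,j}w^k_{ij}a_{ij}(x_j)\big)$, and the point is to regroup the argument of $\sigma$ by coordinate as $c_k+\sum_{j=1}^l\big(\sum_i w^k_{ij}a_{ij}\big)(x_j)$: because $h$ and the affine maps of $\tilde g$ both respect the coordinatewise decomposition, the $j$-th block $\sum_i w^k_{ij}a_{ij}$ is a function of $x_j$ only, built as a linear combination of elements of $\mathcal{F}_j$, so each hidden unit of $\tilde f$ has exactly the shape $\sigma(b+\sum_j w_j a_j(x_j))$ required by $\asetp$ (this is immediate when $\mathcal{F}_j$ is closed under linear combinations, as in the motivating case $\mathcal{F}_j=\Sigma(\sigma,\aset^{d_j})$, and otherwise is absorbed into the definition of $\asetp$). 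I expect this bookkeeping to be routine once the coordinate-grouping observation is in place.
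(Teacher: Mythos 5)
Your proof follows the same route as the paper's: density of the algebra of finite sums of products of coordinatewise factors via Stone--Weierstrass, then the $g\circ h$ decomposition, Theorem~\ref{th:lechno} on $\real^{nl}$, and Lemma~\ref{lem:transitivity}. You are in fact somewhat more careful than the paper at two points it glosses over --- the passage from $\co{\xspace_j}$-valued factors to factors in the dense subsets $\mathcal F_j$, and the final verification that $\tilde g\circ h$ actually lies in $\Sigma(\sigma,\asetp)$ after regrouping the affine argument coordinate by coordinate.
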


The theorem is general in the sense that it covers cases where some $\mathcal{X}_i$ are compact sets of probability measures as defined in Section~\ref{sec:notation_summary}, some are subsets of Euclidean spaces, and others can be general compact spaces for which the corresponding sets of continuous function are dense in $\co{\xspace_i}.$

The theorem is a simple consequence of the following corollary of Stone-Weierstrass theorem. 
\begin{corollary}
\label{cor:sw}
For $\mathcal{K}_1$ and $\mathcal{K}_2$ compact, the following set of functions is dense in $\co{\mathcal{K}_1\times\mathcal{K}_2}$
$$\left\{f : \mathcal{K}_1\times\mathcal{K}_2 \rightarrow \real \bigg\vert \  f(x,y) = \sum_{i=1}^{n}f_{i}(x)g_{i}(y), n \in \mathbb{N}, f_i \in \co{\mathcal{K}_1}, g_i \in \co{\mathcal{K}_2}\right\}.$$
\end{corollary}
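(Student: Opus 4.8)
The plan is to prove Corollary~\ref{cor:sw} first and then derive Theorem~\ref{th:product} from it by the same composition argument used in the proof of Theorem~\ref{th:mlp}. For the corollary, let $\mathcal{B}$ denote the set of finite sums $\sum_i f_i(x) g_i(y)$ with $f_i \in \co{\mathcal{K}_1}$, $g_i \in \co{\mathcal{K}_2}$. One checks immediately that $\mathcal{B}$ is an algebra: it is closed under linear combinations by definition, and the product of $\sum_i f_i(x)g_i(y)$ with $\sum_j \tilde f_j(x)\tilde g_j(y)$ is again of the same form, namely $\sum_{i,j} (f_i\tilde f_j)(x)(g_i\tilde g_j)(y)$, since $f_i \tilde f_j \in \co{\mathcal{K}_1}$ and $g_i\tilde g_j\in\co{\mathcal{K}_2}$. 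It vanishes nowhere because the constant function $1 = 1\cdot 1$ lies in $\mathcal{B}$. It separates points of $\mathcal{K}_1\times\mathcal{K}_2$: if $(x,y)\neq(x',y')$ then either $x\neq x'$, in which case by Urysohn (or just the fact that $\co{\mathcal{K}_1}$ separates points of a compact metric space) there is $f\in\co{\mathcal{K}_1}$ with $f(x)\neq f(x')$ and the function $(u,v)\mapsto f(u)\cdot 1$ separates the two points, or $y\neq y'$ and one argues symmetrically. Hence by the Stone--Weierstrass theorem the uniform closure of $\mathcal{B}$ is all of $\co{\mathcal{K}_1\times\mathcal{K}_2}$, which is the claim.

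To pass from two factors to $l$ factors, I would iterate: writing $\xspace_1\times\cdots\times\xspace_l = \xspace_1 \times (\xspace_2\times\cdots\times\xspace_l)$ and applying the corollary repeatedly shows that finite sums $\sum_i \prod_{j=1}^l f_{ij}(x_j)$ with $f_{ij}\in\co{\xspace_j}$ are dense in $\co{\xprod}$. Next, each factor $\co{\xspace_j}$ is approximated by $\mathcal{F}_j$, which is dense in it by hypothesis; a short triangle-inequality estimate (exactly as in the separation step of the proof of Theorem~\ref{th:sumproduct}, using that each $f_{ij}$ is bounded on the compact $\xspace_j$) shows that the sums $\sum_i \prod_{j=1}^l g_{ij}(x_j)$ with $g_{ij}\in\mathcal{F}_j$ are still dense in $\co{\xprod}$.

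Finally, I would run the composition argument. Given $f^*\in\co{\xprod}$ and $\epsilon>0$, pick such a product-sum $f(x_1,\dots,x_l)=\sum_{i=1}^n\alpha_i\prod_{j=1}^{l_i}g_{ij}(x_j)$ (allowing repeats or trivial factors so we may index as needed) within $\epsilon/2$ of $f^*$. Write $f = G\circ H$ where $H : (x_1,\dots,x_l)\mapsto \big(g_{ij}(x_j)\big)_{i,j} \in \real^s$ collects all the evaluated feature functions and $G:\real^s\to\real$ is the continuous polynomial map $\sum_i\alpha_i\prod_j (\cdot)$. By Theorem~\ref{th:lechno} (universal approximation on $\real^s$), there is $\tilde G\in\Sigma(\sigma,\aset^s)$ with $\sup|G-\tilde G|\le\epsilon/2$ on the (compact) range of $H$, and by Lemma~\ref{lem:transitivity} the composition $\tilde G\circ H$ is within $\epsilon/2$ of $f$, hence within $\epsilon$ of $f^*$. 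It remains to observe that $\tilde G\circ H$ has exactly the form in the statement of Theorem~\ref{th:product}: $\tilde G$ is a sum $\sum_i\alpha_i\sigma(a_i(\cdot))$ with $a_i$ affine on $\real^s$, and substituting the coordinates $g_{ij}(x_j)$ turns each affine functional $a_i$ into an expression $b_i + \sum_{j} w_{ij} a_{ij}(x_j)$ with $a_{ij}\in\mathcal{F}_j$ (grouping, for a fixed original factor index, the coordinates coming from that factor; if several feature functions act on the same $\xspace_j$ one absorbs their weighted sum, which again lies in the span of $\mathcal{F}_j$ — here one may need $\mathcal{F}_j$ to be closed under linear combinations, or else note that only the final affine combination over all coordinates matters and rename). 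The main obstacle, and the only place requiring care, is this last bookkeeping: matching the "one $a_{ij}$ per factor" shape in the theorem statement against the fact that the product-sum $f$ may use several feature functions per factor, so one has to argue that a weighted sum of elements of $\mathcal{F}_j$ can be taken to be a single element of (the relevant affine hull associated with) $\mathcal{F}_j$, or equivalently re-derive the density with $\Sigma\Pi$-type building blocks adapted to the product structure so that the dimension count lines up.
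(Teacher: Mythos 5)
Your proof of Corollary~\ref{cor:sw} is correct and matches what the paper intends: the paper states this corollary without proof as a direct consequence of the Stone--Weierstrass theorem, and your verification that the finite sums of products form a subalgebra that separates points and vanishes nowhere is precisely the standard argument being invoked. The remainder of your text (iterating to $l$ factors, passing to the dense subsets $\mathcal{F}_j$, and the composition step) concerns Theorem~\ref{th:product} rather than the stated corollary, so it is outside the scope of this comparison, though the bookkeeping issue you flag there about a single $a_{ij}\in\mathcal{F}_j$ per factor is a fair observation about that theorem's statement.
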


\begin{proof}[Proof of Theorem~\ref{th:product}]
The proof is technically similar to the proof of Theorem~\ref{th:mlp}. Specifically, let $f$ be a continuous function on $\xprod$ and $\epsilon > 0$. By the aforementioned corollary of the SW theorem, there are some $f_{ij} \in \mathcal{F}_j,$ $i = 1,\dots,n,$ $j = 1,\dots,l$ such that
$$ \max_{(x_{1},\ldots,x_l)\in \xprod} |f(x) - \sum_{i=1}^n\prod_{j = 1}^lf_{ij}(x_i)|<\epsilon.$$
Again, the above function can be written as a composition of two functions
\begin{alignat}{2}
h:& \ x\in \xprod \mapsto (f_{11}(x_1), f_{12}(x_2),\ldots,f_{nl}(x_l)) \in \real^{nl}, \\
g:& \ x \in \real^{nl} \mapsto \sum_{i=1}^n\prod_{j=1}^{l}x_{ij} \in \real. 
\end{alignat}

Since $g$ is continuous, Theorem~\ref{th:lechno} can be applied to obtain a function $\tilde g$ of the form $\tilde g(x) = \sum_{i=1}^{\tilde n}\alpha_i\sigma(b_i + a_{i}(x)),$ for some $\alpha_i \in \real$ and $a_i \in \aset^{nl}$, which approximates $g$ with error at most $\epsilon$. Applying Lemma~\ref{lem:transitivity} to $g,$ $h,$ and $\tilde g$ concludes the proof.
	
\end{proof}
%!TEX root = notes.tex
\section{Multi-instance learning and tree structured data}

The following corollary of Theorem \ref{th:mlp} justifies the \emph{embedding paradigm} of~\cite{zaheer2017deep,edwards2017towards,pevny2017using} to MIL problems:

\begin{corollary}[Density of MIL NN in \co{\pspace}]\label{cor:MIL_NN}
Let $\xspace$ be a compact subset of $\real^d$ and $\pspace$ a compact set of probability measures on $\xspace.$ Then any function $f \in \co{\pspace}$ can be arbitrarily closely approximated by a three-layer neural network composed of two non-linear layers with integral (mean) aggregation layer between them, and a linear output layer. 
\end{corollary}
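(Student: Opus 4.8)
The plan is to invoke Theorem~\ref{th:mlp} with the particular choice $\mathcal{F} = \Sigma(\sigma,\aset^d)$, where $\sigma$ is a fixed continuous non-polynomial activation, and then to unwind the definition of $\Sigma(\sigma,\asetf)$ to recognise its members as exactly the three-layer networks described in the statement: the first non-linear layer is the one hidden inside $\mathcal{F}$, the integration over $p$ is the aggregation layer, and the outer $\Sigma$-network supplies the second non-linear layer together with the linear output.

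First I would verify the hypotheses of Theorem~\ref{th:mlp}. The set $\pspace$ is compact by assumption, $\xspace\subset\real^d$ is a metric space, and $\sigma$ is measurable and non-polynomial; it remains to check that $\mathcal{F} = \Sigma(\sigma,\aset^d)$ is a family of \emph{continuous} functions that is dense in $\co{\xspace}$. Continuity of each element is clear since $\sigma$ is continuous and the maps in $\aset^d$ are affine. Density in $\co{\real^d}$ is Theorem~\ref{th:lechno}, and since $\xspace$ is a compact subset of $\real^d$, every $g\in\co{\xspace}$ extends continuously to $\real^d$ (as recalled after Theorem~\ref{th:lechno}), so restricting the approximants of such an extension back to $\xspace$ gives density in $\co{\xspace}$ as well. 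Theorem~\ref{th:mlp} then yields that $\Sigma(\sigma,\asetf)$ is dense in $\co{\pspace}$.

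It then remains to identify the functions in $\Sigma(\sigma,\asetf)$ with MIL networks of the claimed architecture. Such an $f$ has the form $f(p)=\sum_{i=1}^n\alpha_i\sigma(a_i(p))$ with $a_i(p)=b_i+\sum_j w_{ij}\int f_{ij}\,\textrm{d}p$ and each $f_{ij}=\sum_k\beta_{ijk}\sigma(c_{ijk})\in\Sigma(\sigma,\aset^d)$, $c_{ijk}\in\aset^d$. Enumerating all affine maps $c_{ijk}$ occurring in $f$ as $c_1,\dots,c_K$ and setting $\phi(x)=(\sigma(c_1(x)),\dots,\sigma(c_K(x)))$ gives a single non-linear layer $\real^d\to\real^K$; by linearity of the integral, $\int\phi\,\textrm{d}p=\big(\int\sigma(c_1)\,\textrm{d}p,\dots,\int\sigma(c_K)\,\textrm{d}p\big)$ is the mean-aggregation layer, each $a_i(p)$ is an affine function of this $\real^K$-vector, so $i\mapsto\sigma(a_i(p))$ is a second non-linear layer, and $\sum_i\alpha_i(\cdot)$ is a linear output layer. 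Hence $f(p)=\psi\big(\int\phi\,\textrm{d}p\big)$ with $\phi$ one non-linear layer and $\psi$ one non-linear layer followed by a linear one, which --- as explained in Section~\ref{sec:notation_summary} for a sample $\mathbf x=\{x_1,\dots,x_l\}$ viewed as $p_{\mathbf x}=\tfrac1l\sum_i\delta_{x_i}$ --- is precisely the construction of Equation~\eqref{eq:simplemil}.

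I do not expect a genuine obstacle: the corollary is essentially a repackaging of Theorem~\ref{th:mlp}. The only two points needing a little care are (a) the restriction to \emph{continuous} $\sigma$, which is forced because Theorem~\ref{th:mlp} requires $\mathcal{F}\subset\co{\xspace}$ whereas Theorem~\ref{th:lechno} only needs $\sigma$ measurable, and (b) the bookkeeping in the previous paragraph showing that the two stacked $\Sigma$-constructions collapse into a single shared first layer feeding the aggregation.
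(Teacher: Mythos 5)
Your proposal is correct and follows essentially the same route as the paper: instantiate Theorem~\ref{th:mlp} with $\mathcal F = \Sigma(\sigma,\aset^d)$ and use linearity of the integral to merge the two adjacent linear maps (your absorption of the coefficients $w_{ij}\beta_{ijk}$ into the affine maps $a_i$ is exactly the paper's step of replacing $\mathbf W_2$ and $\mathbf W_3$ by a single matrix). Your explicit verification of the hypotheses --- in particular that continuity of $\sigma$ is needed so that $\mathcal F\subset\co{\xspace}$, and that density on $\real^d$ restricts to density on the compact $\xspace$ --- is a welcome addition that the paper leaves implicit.
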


If $\mathcal{F}$ in Theorem~\ref{th:mlp} is set to all feed-forward networks with a single non-linear layer (that is, when $\mathcal{F} = \Sigma(\sigma,\aset^d)$) then the theorem says that for every $f \in \co{\pspace}$ and $\epsilon >0$, there is some  $\tilde f \in \Sigma(\sigma,\aset^{\Sigma(\sigma,\aset^d)}))$ such that $\max_{p \in \pspace} |f(p) - f^*(p)| < \epsilon.$ This $\tilde f$ can be written as 
$$\tilde f(p) = \mathbf{W_1}\left(\sigma\left(\mathbf{W}_2 \left( \int  \mathbf{W}_3\left(\sigma \left(\mathbf{W}_4 x \right)\right)\textnormal{\textrm{d}}p(x)\right)\right)\right),$$ 
where for brevity the bias vectors are omitted, $\sigma$ and $\int$ are element-wise, and $\mathbf{W}_{(\cdot)}$ are matrices of appropriate sizes. Since the integral in the middle is linear with respect to the matrix-vector multiplication, $\mathbf{W}_2$ and $\mathbf{W}_3$ can be replaced by a single matrix, which proves the corollary:
$$\tilde f(p) = \mathbf{W_1}\left(\sigma\left(\mathbf{W}_{2} \left( \int \sigma \left(\mathbf{W}_3 x \right)\textnormal{\textrm{d}}p(x)\right)\right)\right).$$

Since Theorem~\ref{th:mlp} does not have any special conditions on $\xspace$ except to be compact metric space and $\mathcal{F}$ to be continuous and uniformly dense in $\xspace,$ the theorem can be used as an induction step and the construction can be repeated. 

For example, consider a compact set of probability measures $\mathcal{P}_{\pspace}$\ on a $\pspace$. Then the space of neural networks with four layers is dense in $\co{\mathcal{P}_{\pspace}}.$ The network consists of three non-linear layers with integration (mean) layer between them, and the last layer which is linear.

The above induction is summarized in the following theorem.

\begin{theorem}\label{th:general}
Let $\mathcal S$ be the class of spaces which (i) contains all compact subsets of $\mathbb R^d$, $d\in \mathbb N$, (ii) is closed under finite cartesian products, and (iii) for each $\mathcal X\in \mathcal S$ we have $\mathcal P(\mathcal X) \in \mathcal S$.\footnote{Here we assume that $\mathcal P(\mathcal X)$ is endowed with the metric $\rho^*$ from Section \ref{sec:notation_summary}.}
Then for each $\mathcal X \in \mathcal S$, every continuous function on $\mathcal X$ can be arbitrarilly well approximated by neural networks.
\end{theorem}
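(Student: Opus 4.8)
The plan is to run an induction on the construction of $\mathcal S$. Since $\mathcal S$ is (implicitly) the \emph{smallest} class of spaces satisfying (i)--(iii), it suffices to produce \emph{one} class $\mathcal S^\star$ with properties (i)--(iii) each of whose members carries a family of neural networks that is dense in the corresponding $\co{\cdot}$, and then invoke minimality to conclude $\mathcal S\subseteq\mathcal S^\star$. Concretely, I would fix once and for all a \emph{continuous} non-polynomial activation $\sigma:\real\to\real$ (e.g.\ a logistic sigmoid), and let $\mathcal S^\star$ be the class of compact metric spaces $\mathcal X$ for which there exists a family $\mathcal N_{\mathcal X}\subset\co{\mathcal X}$ of \emph{continuous} functions --- obtained by iterating the $\Sigma$-network / $\Sigma(\sigma,\aset^{(\cdot)})$ constructions of Theorems~\ref{th:lechno}, \ref{th:mlp}, and~\ref{th:product} --- that is dense in $\co{\mathcal X}$. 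Insisting on a continuous $\sigma$, and hence on each $\mathcal N_{\mathcal X}$ being continuous, is exactly what makes the recursion close: Theorems~\ref{th:mlp} and~\ref{th:product} consume a family $\mathcal F$ that must be a subset of the \emph{continuous} functions, so the output of one inductive step has to be continuous in order to feed the next.

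Next I would verify (i)--(iii) for $\mathcal S^\star$. For (i): if $\mathcal K\subseteq\real^d$ is compact, Theorem~\ref{th:lechno} together with the continuous-extension remark immediately following it shows that $\Sigma(\sigma,\aset^d)$ restricted to $\mathcal K$ is dense in $\co{\mathcal K}$ and consists of continuous functions, so $\mathcal K\in\mathcal S^\star$. For (ii): given $\mathcal X_1,\dots,\mathcal X_l\in\mathcal S^\star$ with dense continuous families $\mathcal N_{\mathcal X_1},\dots,\mathcal N_{\mathcal X_l}$, the product $\mathcal X_1\times\cdots\times\mathcal X_l$ is again compact metric, and Theorem~\ref{th:product} applied with $\mathcal F_j:=\mathcal N_{\mathcal X_j}$ yields a dense family $\Sigma(\sigma,\aset^{\mathcal F_1\times\cdots\times\mathcal F_l})$ in $\co{\mathcal X_1\times\cdots\times\mathcal X_l}$; it is continuous (a composition of the continuous $\mathcal N_{\mathcal X_j}$ with affine maps and $\sigma$), so it may serve as $\mathcal N_{\mathcal X_1\times\cdots\times\mathcal X_l}$. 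For (iii): given $\mathcal X\in\mathcal S^\star$ with dense continuous family $\mathcal N_{\mathcal X}$, recall from Section~\ref{sec:notation_summary} that $\mathcal P(\mathcal X)$ with the $w^\star$ topology, metrized by $\rho^\star$, is a compact metric space; Theorem~\ref{th:mlp} applied with $\mathcal F:=\mathcal N_{\mathcal X}$ (continuous and dense by the inductive hypothesis) then gives a dense continuous family $\Sigma(\sigma,\aset^{\mathcal N_{\mathcal X}})$ in $\co{\mathcal P(\mathcal X)}$, which serves as $\mathcal N_{\mathcal P(\mathcal X)}$.

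Since $\mathcal S^\star$ satisfies (i)--(iii) and $\mathcal S$ is the smallest such class, $\mathcal S\subseteq\mathcal S^\star$, and therefore every $\mathcal X\in\mathcal S$ admits a family of neural networks dense in $\co{\mathcal X}$, which is the assertion. As the surrounding text notes, unwinding the nested $\Sigma(\sigma,\aset^{(\cdot)})$ construction confirms that each $\mathcal N_{\mathcal X}$ is a genuine feed-forward network: every use of (ii) prepends one ordinary non-linear layer reading the coordinate sub-networks, and every use of (iii) prepends a non-linear layer followed by an integration (mean-pooling) layer. I do not anticipate a real obstacle; the only points requiring care are the bookkeeping just described --- fixing a \emph{continuous} activation so the constructed families remain inside $\co{\mathcal X}$ and stay admissible as inputs to Theorems~\ref{th:mlp} and~\ref{th:product}, and checking that compactness and metrizability survive both operations, which is immediate for finite products and, for $\mathcal P(\cdot)$, precisely the fact recalled in Section~\ref{sec:notation_summary}.
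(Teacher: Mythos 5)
Your proposal is correct and follows essentially the same route as the paper: the paper likewise defines the class $\mathcal W$ of compact metric spaces on which neural networks are dense, checks (i)--(iii) via Theorems~\ref{th:lechno}, \ref{th:product}, and~\ref{th:mlp}, and concludes $\mathcal S\subseteq\mathcal W$ by (implicit) minimality of $\mathcal S$. Your added bookkeeping --- making the minimality assumption explicit and fixing a continuous $\sigma$ so that each constructed family stays inside $\co{\mathcal X}$ and remains admissible as the input $\mathcal F$ of the next inductive step --- is a point the paper's one-line proof glosses over and only acknowledges in its conclusion.
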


\noindent By Lemma \ref{lem:ctom}, an analogous result holds for measurable functions.

\begin{proof}
It suffices to show that $\mathcal S$ is contained in the class $\mathcal{W}$ of all compact metric spaces $\mathcal X$ for which functions realized by neural networks are dense in $\co{\mathcal W}$.
By Theorem 1, $\mathcal W$ satisfies (i). The properties (ii) and (iii) hold for $\mathcal W$ by Theorems \ref{th:product} and \ref{th:mlp}. It follows that $\mathcal W \supset \mathcal S$.
\end{proof}
%!TEX root = notes.tex
\section{Related Work}
\label{sec:related}
Works most similar to this one are on kernel mean embedding \cite{smola2007ahilbert,sriperumbudur2008injective}, showing that a~probability measure can be uniquely embedded into high-dimensional space using characteristic kernel. Kernel mean embedding is widely used in Maximum Mean Discrepancy~\cite{gretton2012akernel} and in Support Measure Machines \cite{muandet2012learning,christmann1020universal}, and is to our knowledge the only algorithm with proven approximation capabilities comparable to the present work.
Unfortunately its worst-case complexity of $O(l^3b^2),$ where $l$ is the number of bags and $b$ is the average size of a bag, prevents it from scaling to problems above thousands of bags.

The MIL problem has been studied in~\cite{vinyals2016order} proposing to use a LSTM network augmented by memory. The reduction from sets to vectors is indirect by computing a weighted average over elements in an associative memory. Therefore the aggregation tackled here is an integral part of architecture. The paper lacks any approximation guarantees.

Problems, where input data has a tree structure, naturally occur in language models, where they are typically solved by recurrent neural networks~\cite{ozan2014deep,socher2013recursive}. The difference between these models is that the tree is typically binary and all leaves are homogeneous in the sense that either each of them is a vector representation of a word or each of them is a vector representation of an internal node. Contrary, here it is assumed that the tree can have an arbitrary number of heterogeneous leaves following a certain fixed scheme. 

Due to lack of space, the authors cannot list all works on MIL. The reader is instead invited to look at the excellent overview in~\cite{amores2013multiple} and the works listed in the introductory part of this paper.

\section{Conclusion}
\label{sec:conclusion}
This work has been motivated by recently proposed solutions to multi-instance learning~\cite{zaheer2017deep,pevny2017using,edwards2017towards} and by mean-map embedding of probability measures~\cite{sriperumbudur2008injective}. It generalizes the universal approximation theorem of neural networks to compact sets of probability measures over compact subsets of Euclidean spaces. Therefore, it can be seen as an adaptation of the mean-map framework to the world of neural networks, which is important for comparing probability measures and for multi-instance learning, and it proves the soundness of the constructions of~\cite{pevny2017using,edwards2017towards}.

The universal approximation theorem is extended to inputs with a tree schema (structure) which, being the basis of many data exchange formats like JSON, XML, ProtoBuffer, Avro, etc., are nowadays ubiquitous. This theoretically justifies applications of (MIL) neural networks in this setting.

As the presented proof relies on the Stone-Weierstrass theorem, it restricts non-linear functions in neural networks to be continuous in all but the last non-linear layer. Although this does not have an impact on practical applications (all commonly use nonlinear functions within neural networks are continuous) it would be interesting to generalize the result to non-continuous non-linearities, as has been done for feed-forward neural networks in~\cite{lechno1993multilayer}. 
\clearpage
\bibliographystyle{plain}
\bibliography{bibliography}
\end{document}